\documentclass{article}

\usepackage[nonatbib,main,preprint]{neurips_2026}

\usepackage[numbers]{natbib}

\usepackage[utf8]{inputenc} 
\usepackage[T1]{fontenc}    
\usepackage{hyperref}       
\usepackage{url}            
\usepackage{booktabs}       
\usepackage{amsfonts}       
\usepackage{nicefrac}       
\usepackage{microtype}      
\usepackage{xcolor}         

\usepackage{subcaption}
\usepackage{algorithm}
\usepackage{algorithmic}
\usepackage{color,soul}
\usepackage{tabularx}
\usepackage{multirow}
\usepackage{ragged2e}
\usepackage{tabularx}
\usepackage{arydshln}
\usepackage{amsmath}
\usepackage{amssymb}
\usepackage{mathtools}
\usepackage{amsthm}
\usepackage{tcolorbox}
\usepackage{array}
\usepackage{makecell}
\usepackage{caption}
\usepackage[capitalize,noabbrev]{cleveref}
\usepackage{eqparbox}
\usepackage{graphicx}
\usepackage{inconsolata}
\usepackage{wrapfig}
\usepackage{enumitem}

\newtheorem{proposition}{Proposition}

\theoremstyle{remark} 
\newtheorem{remark}{Remark}

\title{Structural Abstraction as an Inductive Bias for Non-Stationary Language Model Training}

\author{Elnaz Rahmati \And Nona Ghazizadeh \And Zhivar Sourati \And Nina Rouhani \And Morteza Dehghani
        \AND
\normalfont  University of Southern California \\
\texttt{\small\{erahmati, nghaziza, souratih, nrouhani, mdehghan\}@usc.edu} }

\begin{document}

\maketitle

\begin{abstract}
A foundational principle in cognitive science holds that intelligent agents do not learn by storing experiences as isolated instances, but by forming abstract schemas that capture relational structure shared across situations. Even though this claim is well supported by behavioral and neuroimaging studies, its role as a computational training signal in language models remains underexplored. We target this gap in the setting of non-stationary language model training, asking does biasing learning toward structural abstraction reduce catastrophic interference and improve relational generalization as predicted by human results? To study this question, we introduce Abstraction-Augmented Training (AAT), a lightweight loss-level modification that jointly optimizes over concrete instances and their structural abstractions, and two benchmarks, the Relational Cycle Benchmark (RCB) and the Narrative Abstraction Benchmark (NAB). These resources operationalize core cognitive constructs: entity masking as a computational analog of relational alignment, and proverbs as vehicles for implicit abstract meaning that must be inferred across surface-dissimilar situations. Our empirical results demonstrate that AAT consistently reduces forgetting and improves generalization in a pattern that aligns with cognitive predictions for schema-based learning. Beyond the practical implications for continual learning, these results offer preliminary computational evidence that structural abstraction is a signal for stable learning in non-stationary environments.
\end{abstract}

\section{Introduction}
\label{sec:intro}

A central question in cognitive science is how intelligent agents acquire knowledge that transfers across contexts, rather than merely memorizing the specific instances they have encountered. Behavioral and theoretical work suggests that humans do not primarily store experiences as isolated episodes, but instead form abstract \emph{schemas} -- representations that capture relational structure shared across situations \citep{schank2013scripts}. These schemas enable generalization by supporting the application of knowledge learned in one context to unseen yet structurally similar situations \citep{gick1983schema,gentner1983structure}. Human behavioral findings further demonstrate that experiences emphasizing abstract relational structure, rather than instance-specific details, are more likely to be retained and reused in new contexts \citep{clement1994effects}. Cognitive neuroscience corroborates this: when events are encoded in the brain, learning tends to organize experience around components shared across related events, supporting replay at multiple levels of abstraction and enabling stable generalization and decision-making \citep{masis2022schema,mattar2018prioritized}.

These principles become especially consequential in non-stationary settings, where knowledge must be acquired continuously from an evolving stream of information. Language models are pre-trained on data collected up to a fixed point in time, yet real-world information continues to evolve. Retraining from scratch as new data becomes available is computationally prohibitive, motivating \emph{online continual learning} (OCL); a regime in which data arrives as a stream, task boundaries are absent, and each example is observed only once. In this setting, the tension between instance-level memorization and structural generalization is most acute: models must not only learn new facts, but retain previously acquired knowledge and induce patterns that transfer to unobserved cases. Training on non-stationary streams introduces two well-studied challenges: \emph{catastrophic forgetting}, where previously learned knowledge is overwritten \citep{goodfellow2015empiricalinvestigationcatastrophicforgetting,doi:10.1073/pnas.1611835114,nguyen2019toward,pmlr-v162-mirzadeh22a,tadros2022sleep,11151751}, and \emph{plasticity loss}, where the model's capacity to incorporate new information degrades over time \citep{pmlr-v202-lyle23b,dohare2024loss,ICLR2025_359c0c4d,ICLR2025_4d423587,ICLR2025_ba9e3d60}.

Existing methods address these challenges through regularization \citep{doi:10.1073/pnas.1611835114,chaudhry2018efficient} or experience replay (ER), which revisits stored examples from earlier training to stabilize learning \citep{NEURIPS2019_fa7cdfad}. While ER is often effective \citep{van2020brain,NEURIPS2020_b704ea2c,wang2025experience}, it achieves stability by re-exposing the model to past instances requiring a memory buffer, which becomes increasingly costly in strictly online settings where storage is limited or infeasible. Importantly, such instance-level replay does not address the deeper problem that models trained primarily on surface-level instance signals may successfully recall training examples while losing the shared relational structures necessary for inductive reasoning.

Motivated by the cognitive principle that abstract, schema-consistent representations are more robustly retained and generalized \citep{hofstadter2001analogy,gentner2003we}, we introduce \emph{Abstraction-Augmented Training} (AAT), a loss-level inductive bias. AAT jointly optimizes over a concrete instance and its corresponding structural abstraction at first encounter, biasing the model toward representations invariant to instance-specific surface variation. A small number of local replays then reinforce the concrete instance, consolidating factual details without allowing entity-specific gradients to dominate. Critically, AAT stores no past data and its stabilizing effect arises entirely from the structural signal introduced during learning.

To study the role of abstraction rigorously, we introduce two benchmarks that operationalize the distinction between \emph{episodic} retention of specific facts and \emph{structural} generalization to unobserved relations. The \emph{Relational Cycle Benchmark} (RCB) is derived from knowledge graphs and instantiates abstraction through entity masking, following the logic of Structure-Mapping Theory (SMT)\citep{gentner1983structure}. SMT suggests that removing surface-level entity identities isolates the relational skeleton that supports analogical transfer. This is complemented by the \emph{Narrative Abstraction Benchmark} (NAB), which operates at a higher level by requiring models to align narratives through shared proverbs -- standard proxies for abstract relational meaning \citep{gentner1983structure,honeck1997proverb} -- even when surface forms differ entirely.

We provide a theoretical account showing how AAT suppresses instance-specific gradient components while amplifying updates aligned with shared relational structure, paralleling the cognitive finding that schema-congruent information is preferentially encoded and retained \citep{masis2022schema}. Our empirical results across both benchmarks and two model families show that AAT produces a characteristic pattern predicted by cognitive science: reduced forgetting, improved generalization to unobserved relational structure, and greater robustness to data order.

Taken together, this work makes three contributions: (i) we introduce RCB and NAB as benchmarks that operationalize cognitive constructs as measurable properties of language model behavior, filling a gap in OCL evaluation while offering tools for studying schema-based learning; (ii) we introduce AAT as a method for testing whether structural abstraction, as a training signal, is sufficient to produce schema-consistent learning dynamics; (iii) we provide empirical and theoretical evidence that it is, and interpret these findings not only as a practical finding in OCL, but as a step toward understanding the computational basis of abstract relational learning in natural and artificial systems.

\section{Method}

Cognitive accounts of human learning distinguish between \emph{episodic} representations, which encode specific experienced events, and \emph{schematic} representations, which capture relational patterns shared across events \citep{schank2013scripts}. Effective long-term learning requires both: episodic detail supports recall of specific facts, while schematic structure supports generalization and reduces interference between related memories \citep{masis2022schema}. This tension between instance-level memorization and structural generalization also arises in machine learning. We operationalize this distinction computationally within OCL. In the following sections we introduce AAT and provide an analysis of its stabilizing effect.

\subsection{Abstraction-Augmented Online Learning}

A well-documented failure mode in human learning occurs when surface-level details overwhelm the underlying relational structure, leading to context-bound representations that impair transfer to new domains \citep{gick1983schema,clement1994effects}. A similar pattern arises when fine-tuning LLMs on knowledge-graph-derived data streams: entity-specific surface features tend to dominate the learning signal, overwriting parameters that encode broader relational dependencies \citep{jang2021towards,wu2023online}. As a result, models may successfully recall training instances while losing the shared structures necessary for inductive reasoning. 

\begin{wrapfigure}{r}{0.47\textwidth}
  \centering
  \hrule height 1pt
  \vspace{3pt}
  \captionof{algorithm}{Abstraction-Augmented Training}
  \label{alg:abstraction}
  \vspace{-3pt}
  \hrule 
  \vspace{4pt}
  
  \begin{footnotesize} 
  \renewcommand{\algorithmiccomment}[1]{ {\footnotesize \textcolor{gray}{\textsl{// #1}}}} 
  
  \begin{algorithmic}[1]
     \STATE {\bfseries Input:} Dataset $\mathcal{D}$, initial parameters $\theta$, abstraction ratio $\alpha$, local replay count $n$, learning rate $\eta$.
     \FOR{each batch $\mathcal{B} \subset \mathcal{D}$}
        \FOR{$j = 0$ {\bfseries to} $n-1$}
           \STATE $\hat{y}_{inst} \leftarrow \pi(x_{inst}; \theta)$ \COMMENT{Instance forward pass}
           \STATE $\hat{y}_{abs} \leftarrow \pi(x_{abs}; \theta)$ \COMMENT{Abstract forward pass}
           
           \IF{$j = 0$} \label{alg:if-j-0}
              \STATE $\mathcal{L}(\theta) \leftarrow \alpha \cdot \text{NLL}(\hat{y}_{abs}, y_{abs}) + (1 - \alpha) \cdot \text{NLL}(\hat{y}_{inst}, y_{inst})$ \COMMENT{Combined loss}
           \ELSE \label{alg:if-j>0}
              \STATE $\mathcal{L}(\theta) \leftarrow \text{NLL}(\hat{y}_{inst}, y_{inst})$ \COMMENT{Instance loss}
           \ENDIF
           
           \STATE $\theta \leftarrow \text{Adam}(\nabla_\theta \mathcal{L}(\theta), \theta, \eta)$ \COMMENT{Parameters update}
        \ENDFOR
     \ENDFOR
  \end{algorithmic}
  \end{footnotesize}
  
  \vspace{5pt}
  \hrule
\end{wrapfigure}

Formally, we consider a data stream $D = \{d_1, \dots, d_m\}$ where each instance is observed only once. At each training step $i$, the model $\pi_i$ encounters a batch $b_i = \{d^i_1, \dots, d^i_B\}$. The objective is to maximize performance on the current batch $b_i$ while minimizing forgetting on the historical data $\bigcup_{k=1}^{i-1}b_k$.
We draw on SMT and theory-based categorization \citep{murphy1985role}, which propose that analogical reasoning proceeds by aligning relational structure across domains while abstracting away from surface-level attributes. Computationally, we interpret this as a gradient-level inductive bias. By co-optimizing over a concrete instance and its structural abstraction, we suppress entity-specific gradient components and amplify updates aligned with shared relational patterns, thereby reducing catastrophic interference across sequential examples.

AAT integrates this via a dual-objective loss function paired with local replay (see \Cref{alg:abstraction}). In OCL, each data batch is encountered exactly once. Because information-dense batches may not be fully consolidated in a single update, we introduce a local replay count $n \in [1, 5]$, allowing multiple optimization steps per batch. On first exposure, the model receives both a concrete instance ($x_{inst}$) and its corresponding structural abstraction ($x_{abs}$), where the abstraction is constructed by removing instance-specific surface features (e.g., masking entity identities). This biases the update toward representations invariant to surface variation. The abstraction's contribution to the loss is controlled by $\alpha$, defaulting to $0.5$. Subsequent $n-1$ local replays reinforce the concrete instance, retaining factual detail while the abstract in $j = 0$ prevents entity-specific gradients from dominating. 

\subsection{Theoretical Motivation}

We provide an analysis of why abstraction stabilizes learning under a simplified model. While the assumptions are idealizations, they isolate the mechanism we believe drives the empirical results.

\textbf{Setup.} Consider a model that predicts relations from an input paragraph $x$ using an encoder $\phi_\theta(\cdot)$ followed by a linear head $W$: $f_\theta(x) = W \phi_\theta(x)$. Each example contains two sources of information: (i) relational structure $r$ and (ii) entity identities $e_1, \ldots, e_k$. We decompose the representation as $\phi_\theta(x) = \phi_r(r) + \phi_e(e_1, \ldots, e_k)$, where $\phi_r$ encodes structural features shared across examples within a relational pattern, and $\phi_e$ encodes instance-specific entity signals.

\textbf{Assumptions.}
\begin{enumerate}[label=\textbf{A\arabic*.}]
    \item \textbf{(Additive decomposition)} The encoder decomposes additively as 
    $\phi_\theta(x) = \phi_r(r) + \phi_e(e_1,\ldots,e_k)$, with $\phi_r$ and 
    $\phi_e$ operating on disjoint parameter subsets.
    \item \textbf{(Asymmetric gradient variance)} Entity tokens are long-tail and carry high per-token loss that varies dramatically across sequential examples, while relation tokens are high-probability words with low, stable per-token loss. Formally:
    \[
    \sigma_e^2 = \mathrm{Var}(\nabla_\theta \phi_e^{(i)}) \gg 
    \sigma_r^2 = \mathrm{Var}(\nabla_\theta \phi_r^{(i)}) \geq 0.
    \]
    \item \textbf{(Entity masking)} Under entity masking, the abstract input removes entity identity, yielding $\phi_\theta(x_{abs}) = \phi_r(r)$, so the abstract forward pass contributes no entity-specific gradient.
\end{enumerate}

\begin{proposition}[Abstraction reduces gradient variance]
\label{prop:gradient}
Under Assumptions \textbf{A1--A3}, the AAT combined loss with weight $\alpha \in (0,1)$,
$
\mathcal{L}_i^{\mathrm{total}} = \alpha \cdot \mathcal{L}_i^{\mathrm{abs}} + 
(1-\alpha) \cdot \mathcal{L}_i^{\mathrm{inst}}
$
, is a weighted combination of a low-variance signal (abstract loss, contributing $\nabla_\theta \phi_r^{(i)}$) and a high-variance signal (instance loss, contributing $\nabla_\theta \phi_r^{(i)} + \nabla_\theta \phi_e^{(i)}$). The variance of the combined gradient is therefore strictly lower than that of the instance-only gradient for any $\alpha > 0$, $\mathrm{Var}\!\left(\nabla_\theta \mathcal{L}_i^{\mathrm{total}}\right) < 
\mathrm{Var}\!\left(\nabla_\theta \mathcal{L}_i^{\mathrm{inst}}\right),$
and the signal-to-noise ratio of the relational gradient increases monotonically with $\alpha$,
$
\frac{\|\nabla_\theta \phi_r^{(i)}\|}{\|(1-\alpha)\nabla_\theta \phi_e^{(i)}\|} \uparrow \alpha.
$

\end{proposition}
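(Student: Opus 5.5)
We write the gradient of the combined loss by linearity and then compute its variance directly. Using A1 and A3, the abstract loss contributes $g_r := \nabla_\theta \phi_r^{(i)}$ (through the head $W$, which we absorb into the per-example gradient vectors). The instance loss contributes $g_r + g_e$, with $g_e := \nabla_\theta \phi_e^{(i)}$. Hence
\[
\nabla_\theta \mathcal{L}_i^{\mathrm{total}} = \alpha g_r + (1-\alpha)(g_r + g_e) = g_r + (1-\alpha) g_e .
\]
The first step is to record this identity. It already shows that the relational component is kept at full strength while the entity component is scaled down by $(1-\alpha)$.

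Next we compute variances. By A1, $g_r$ and $g_e$ live on disjoint parameter subsets, so they are orthogonal as vectors. We read $\mathrm{Var}$ of a vector as the trace of its covariance, i.e.\ $\mathbb{E}\|g-\mathbb{E}g\|^2$. Because the blocks are disjoint, the cross-covariance term vanishes identically. This gives
\[
\mathrm{Var}(\nabla_\theta \mathcal{L}_i^{\mathrm{total}}) = \sigma_r^2 + (1-\alpha)^2 \sigma_e^2, \qquad \mathrm{Var}(\nabla_\theta \mathcal{L}_i^{\mathrm{inst}}) = \sigma_r^2 + \sigma_e^2 .
\]
For $\alpha \in (0,1)$ we have $(1-\alpha)^2 < 1$. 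So the strict inequality holds as soon as $\sigma_e^2 > 0$, which A2 provides since $\sigma_e^2 \gg \sigma_r^2 \ge 0$. The gap equals $(1-(1-\alpha)^2)\sigma_e^2 = \alpha(2-\alpha)\sigma_e^2$, and it is increasing in $\alpha$.

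For the signal-to-noise claim, the ratio is
\[
\frac{\|g_r\|}{(1-\alpha)\|g_e\|}.
\]
Holding the per-example $g_r$ and $g_e$ fixed, this is $\|g_r\|/\|g_e\|$ times $1/(1-\alpha)$. The factor $1/(1-\alpha)$ is strictly increasing on $(0,1)$ whenever $g_e \neq 0$, which is what ``$\uparrow \alpha$'' asserts. As a more robust version, we can also state the same monotonicity for the variance ratio $\sigma_r^2/((1-\alpha)^2\sigma_e^2)$.

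The main obstacle is not the algebra but making A1--A3 precise enough that the cross term actually vanishes. If $W$ is shared, the gradient with respect to $W$ mixes $\phi_r$ and $\phi_e$, so the blocks are not disjoint there. We handle this by taking $W$ fixed, or equivalently by restricting attention to encoder parameters, as the statement implicitly does by writing gradients of $\phi$. We also treat the abstract and instance passes as sharing the same realization of $g_r$, which is exactly what A3's identity $\phi_\theta(x_{abs})=\phi_r(r)$ gives. Otherwise an extra covariance term between two copies of $g_r$ appears. Its contribution is bounded by $\sigma_r^2$, and we would note that A2's $\sigma_e^2 \gg \sigma_r^2$ still preserves the strict inequality in that case.
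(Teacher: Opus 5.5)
Your proof is correct and follows the paper's own argument. Linearity gives $\nabla_\theta\mathcal{L}_i^{\mathrm{total}} = g_r + (1-\alpha)g_e$; the disjoint blocks from A1 remove the cross term, so the variance is $\sigma_r^2 + (1-\alpha)^2\sigma_e^2$ against $\sigma_r^2+\sigma_e^2$; and the signal-to-noise claim is just the monotonicity of $1/(1-\alpha)$. Your extra care (reading $\mathrm{Var}$ as the trace of the covariance so the cross term vanishes without any independence assumption, taking $\sigma_e^2>0$ from A2 for strictness, and restricting to encoder parameters with a shared realization of $g_r$) makes explicit what the paper's proof assumes tacitly when it factors out a common $W^\top\nabla\ell$.
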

\noindent Proof of \Cref{prop:gradient} is provided in Appendix~\ref{appendix-proof}.


\begin{remark}
Assumption \textbf{A1} is the strongest idealization. In practice, Transformers do not maintain disjoint parameter subsets for relational and entity representations. The proposition should therefore be read as isolating a mechanism rather than characterizing transformer behavior precisely. The results in \Cref{sec:results} provide empirical support that the predicted effect holds approximately in practice.
\end{remark}

\noindent The key insight is that abstraction suppresses entity-specific gradient components, which vary dramatically across sequential examples, while amplifying updates aligned with shared relational structure, which is consistent across the data stream. This directly parallels the cognitive finding that schema-congruent information is preferentially encoded and retained \citep{masis2022schema}, with $\nabla_\theta \phi_r^{(i)}$ playing the role of schema-consistent signal. By collapsing many entity-specific examples into a shared relational template, each update reinforces a broader equivalence class of relational patterns acting as an implicit structural regularizer that preserves previously learned relational knowledge even as new entity instances arrive. Unlike ER, which achieves stability by revisiting past instances, AAT achieves a stabilizing effect by reinforcing relational structure itself.

\section{Benchmarks}

Schema-based learning theory distinguishes between two different aspects of knowledge, episodic retention of specific observed facts and structural generalization to unobserved relations inferable from shared patterns \citep{schank2013scripts, gentner1983structure}. A model that has acquired abstract relational structure should exhibit both. We introduce two benchmarks that operationalize this distinction at different granularities, each grounded in a specific cognitive construct. RCB targets relational alignment via entity masking. Removing surface-level entity identities isolates the relational skeleton that supports analogical transfer, in direct computational analogy to the alignment process proposed by \citet{gentner1983structure} as the basis of human analogical reasoning. NAB targets implicit analogical transfer through proverbs \citep{honeck1997proverb}, requiring models to align narratives through shared motifs even when surface form, entities, and event structures differ entirely. These benchmarks provide a computational testbed for asking whether language models exhibit the behavioral signatures of schema-based learning.

\paragraph{Relational Cycle Benchmark.} RCB jointly probes factual retention and deductive generalization. It is constructed from a curated ``Relation Bank'' of $\{\text{entity}_1, \text{relation}, \text{entity}_2\}$ triples spanning eight semantic domains. We define 51 distinct relational typologies, small relational subgraphs with a fixed structure. Each typology contains at least three relations arranged to form an undirected cycle. In 60\% of typologies, the cycle encodes an explicit logical dependency in which one relation is strictly entailed by the conjunction of the others (e.g., $\{\textsc{Mother}(\text{Sarah}, \text{John}), \textsc{Mother}(\text{Sarah}, \text{Jane})\}$ entails $\textsc{Sibling}(\text{John}, \text{Jane})$). For remaining typologies, where no entailment is identified under our rule set, one edge is randomly selected as held-out. Each typology is populated with approximately 25 instances by instantiating the same relational structure with entirely different entities sourced from Wikidata \citep{vrandevcic2014wikidata}, yielding 1,245 typology instances and 3,295 unique entity-relation triples. 

For each instance, the held-out relation is treated as the \emph{unknown edge}, while remaining relations (\emph{known edges}) are serialized into a natural-language paragraph with lexical variation to reduce template overfitting. These paragraphs include distractor relations not directly relevant to recovering the unknown edge, requiring the model to identify and apply the relevant structural pattern rather than rely on surface co-occurrence. The design of RCB reflects a specific cognitive hypothesis; the distinction between known and unknown edges maps onto the distinction between episodic and schematic memory \citep{schank2013scripts}. Known edges correspond to directly experienced facts; unknown edges correspond to relations that are inferrable only if the model has abstracted the relational structure underlying the observed instances, the computational analog of schema-supported inference in human memory. (See Appendix~\ref{appendix:datastats} for statistics and baseline performance). 


\paragraph{Narrative Abstraction Benchmark.} NAB probes a higher-order form of abstraction: the alignment of narratives through shared relational motifs expressed as proverbs, even when surface form, entities, and event structures differ entirely. Proverbs have a long history in the analogical reasoning literature as vehicles for abstract relational meaning that must be applied across surface-dissimilar situations \citep{gentner2001metaphor,honeck1997proverb}, making them a natural operationalization of abstraction. Unlike RCB, where entity masking provides an explicit structural signal, NAB requires models to identify implicit relational regularities, without lexical or surface-level overlap between training and evaluation instances.  

Behavioral work shows humans align surface-dissimilar situations by mapping relational roles rather than surface features, requiring genuine abstraction. NAB tests whether language models exhibit similar sensitivity to these proverb-level motifs. The benchmark is constructed from narratives paired with proverbs from ePiC \citep{ghosh-srivastava-2022-epic}, expanded with 15 additional narratives per proverb generated via a prompt-guided analogical generation procedure informed by SMT. This procedure encourages domain-level shifts and penalizes surface overlap, producing narratives that share abstract relational meaning while differing substantially in entities, events, and surface realization (see Appendix~\ref{appendix:epic_expansion}). 

Each narrative is then converted into a pairwise continuation comparison task: a \emph{correct ending} consistent with the proverb's abstract meaning, and a \emph{distractor ending} that is locally coherent but violates the underlying relational motif. Distractors are length-matched and constructed to be plausible story continuations, ensuring that correct prediction requires sensitivity to the proverb-level abstraction rather than superficial coherence cues (see Appendix~\ref{appendix:epic_distractor}). Zero-shot performance on NAB across Qwen2.5-Instruct \citep{qwen2.5} model sizes are provided in Appendix~\ref{appendix:datastats}.

\section{Experimental Set-Up}

We evaluate AAT in an OCL setting, focusing on the trade-off between episodic retention and structural generalization as learning progresses. This section describes the experimental protocol, baselines, and evaluation metrics used to assess learning dynamics and final performance. Unless otherwise stated, all models are trained under identical optimization and data stream conditions.

\subsection{Baselines and Evaluation}\label{sec:baselines}

AAT's combined loss (Algorithm~\ref{alg:abstraction}) introduces a structural stabilization term alongside the instance objective. At the moment of first exposure, this term can be instantiated in two qualitatively different ways: as a structural abstraction of the current instance ($x_{abs}$, as in AAT), or as a sample drawn from a buffer of previously observed instances (as in ER). Both choices target the same optimization problem, reducing interference by supplementing the instance gradient with a signal that generalizes across examples, but through fundamentally different mechanisms. AAT exploits shared relational structure within the current example; ER exploits temporal coverage across past examples. Comparing them directly reveals what each mechanism contributes to stability and generalization. As a lower-bound reference, we include standard fine-tuning with the instance-level loss only ($\alpha = 0$), which isolates the effect of the structural stabilization term by removing it entirely. For ER, we use a reservoir buffer of size 100 with local replay enabled, representing the strongest replay configuration. See Appendix~\ref{sec:imp-detail} for more information on implementation and model selection.

We adopt an interval-based evaluation protocol, assessing the model every $k$ training steps. At evaluation step $i$, metrics are computed over the cumulative batches since the last evaluation, $\mathcal{B}_{cur} = \bigcup_{j=i-k+1}^{i} b_j$, and the full training history $\mathcal{H}_i = \bigcup_{j=1}^{i-k} b_j$. Correctness in RCB is defined by the model's predictive confidence: given a triple $(e_1, r, e_2)$ serialized into a natural language prompt, an instance is correct if $\pi_i(e_2 \mid e_1, r) > \tau$. For NAB, an instance is correct if the conditional log-likelihood of the correct ending exceeds that of the distractor given the same narrative context. We define $\mathbb{I}(\pi_i(d)) = 1$ if the prediction is correct and 0 otherwise.

\textbf{Online Accuracy ($\text{Acc}_{\text{online}}$).} Measures model plasticity, acquiring recently encountered information:
\begin{equation}
\text{Acc}_{\text{online}}(i) = \frac{1}{|\mathcal{B}_{cur}|} \sum_{d \in \mathcal{B}_{cur}} \mathbb{I}(\pi_i(d))
\end{equation}

\textbf{Cumulative Accuracy ($\text{Acc}_{\text{cumul}}$).} Provides a global view of the model's knowledge base, capturing backward transfer as learning new relational structures may clarify previously forgotten edges:
\begin{equation}
\text{Acc}_{\text{cumul}}(i) = \frac{1}{|\bigcup_{j=1}^{i} b_j|} \sum_{d \in \bigcup_{j=1}^{i} b_j} \mathbb{I}(\pi_i(d))
\end{equation}

\textbf{Forgetting ($\text{F}$).} Measures stability by tracking loss of previously acquired knowledge. An instance is forgotten if correctly predicted at some prior step $t < i$ but incorrectly predicted at step $i$, normalized by the set of instances the model learned at least once:
\begin{equation}
F(i) = \frac{|\{d \in \mathcal{H}_i \mid \mathbb{I}(\pi_i(d))=0 \land \exists t < i: \mathbb{I}(\pi_t(d)=1)\}|}{|\{d \in \mathcal{H}_i \mid \exists t < i: \mathbb{I}(\pi_t(d)=1)\}|}
\end{equation}

\subsection{Experiments \& Ablations}

All experiments use a Qwen2.5-1.5B backbone trained with a language modeling objective. Unless otherwise specified, AAT uses $\alpha = 0.5$ and $n = 5$ (implementation details in \Cref{sec:imp-detail}). We first examine the impact of abstraction on  training by tracking forgetting and online accuracy throughout training on RCB, reported separately for known and unknown edges. To empirically investigate the theoretical claim that abstraction reduces gradient variance, we perform a loss landscape analysis using linear interpolation in model weight space \citep{NEURIPS2018_a41b3bb3}, measuring variance and coefficient of variation (CV) along the primary training trajectory and a perpendicular axis. 

To analyze abstraction effectiveness, we ablate the structural abstraction levels: beyond entity masking, we evaluate category-based abstraction (entities replaced with semantic categories such as city, institution, or person) and random abstracts (entity-masked sentences combined without structural coherence) as a control. We also evaluate AAT on NAB to assess whether the cognitive benefits of abstraction generalize from explicit symbolic structure to implicit narrative-level relational motifs. For NAB, we use $n = 1$ and $\alpha = 0.15$; with 90\% of narratives per proverb for training and 10\% held out as unknown. Additional ablations on sensitivity to $\alpha$, $n$, and data order in \Cref{sec:appendix-ablation}.

Finally, we compare AAT against both baselines across RCB metrics: cumulative accuracy ($\text{Acc}_{\text{cumul}}$), step-averaged online accuracy ($\text{Acc}_{\text{online}}$), and forgetting ($F$), measured separately for known and unknown edges. We additionally train SmolLM-1.7B \citep{allal2025smollm2} to assess whether AAT's inductive bias generalizes across model families with substantially different pretraining.

\section{Results} \label{sec:results}

We organize results around three questions. First, what does abstraction do to learning dynamics and loss geometry? Second, what properties of the abstraction drive its effect? Third, how does learning via abstraction compare to learning via instance replay, and what does each mechanism preserve?

\subsection{Training Behavior and Loss Geometry}

As shown in \cref{fig:training-curves}, both normal SFT and AAT exhibit similar high-level dynamics: an initial spike in forgetting at domain transitions, followed by gradual stabilization. For known edges, forgetting decreases and plateaus; for unknown edges, it continues to decrease throughout training. Online accuracy follows a complementary pattern; known-edge accuracy rises then declines while unknown-edge accuracy increases substantially. These dynamics reflect the fundamental tension between fitting instance-level supervision and learning the relational structure that supports inductive generalization.
Despite these shared trends, AAT consistently exhibits lower forgetting and higher online accuracy, particularly on unknown edges and during the early stages of known-edge learning
 suggesting that the structural abstraction signal provides a stabilizing effect.

This stabilization is directly reflected in the geometry of the loss landscape. Compared to normal SFT, AAT reduces loss surface variance by 17.00\% and lowers the coefficient of variation from 0.1549 to 0.1375, confirming the theoretical prediction that abstraction guides optimization toward smoother, lower-interference regions. Along the training trajectory, the abstract loss term improves by an average of 0.243 nats (up to 0.427 nats), while adverse effects on the instance loss remain small (0.038 nats on average). Although absolute loss is marginally higher (2.64\% on average), the resulting landscape exhibits smoother loss and fewer sharp parameter interactions. These results show that the abstraction training signal does not merely add noise or auxiliary supervision, it measurably reshapes the optimization landscape in a direction consistent with reduced catastrophic interference.

To further characterize what abstraction changes about learning, we report mean and standard deviation of cumulative accuracy across three random data orderings for SFT and AAT. Training without abstraction exhibits the highest variance on unknown edges (0.0392), indicating that structural generalization is highly sensitive to the order in which specific instances arrive. AAT achieves higher mean performance while substantially reducing the variance, particularly on unknown edges and overall accuracy (0.4927 $\pm$ 0.0071 vs 0.4674 $\pm$ 0.0197; See Appendix~\ref{sec:appendix-ablation} for full results). By biasing learning toward relational patterns shared across examples rather than the specific sequence of instances, abstraction mitigates order sensitivity.  

These results speak directly to the cognitive predictions that motivated the experimental design. Interference theory in human memory predicts that schema-congruent encoding reduces proactive interference between related memories; that is, learning new instances that share relational structure with previously encoded ones should be less disruptive when that structure is explicitly represented \citep{masis2022schema}. The forgetting and gradient variance results provide a computational parallel: AAT reduces the interference underlying catastrophic forgetting by amplifying shared relational gradients, that mirrors the schema-congruent encoding advantage described in the cognitive literature \citep{masis2022schema}. The data ordering results further corroborate this. Just as schema-based encoding in humans reduces sensitivity to the specific order in which instances are encountered \citep{Sko1988,kuehne2000seql}, AAT reduces the sensitivity of structural generalization to the sequence of training examples. These correspondences do not establish that the underlying mechanisms are the same, but they suggest that the computational principle is doing work that is recognizably similar to what cognitive accounts predict.
\begin{figure*}[t]
    \centering
    \includegraphics[width=\linewidth]{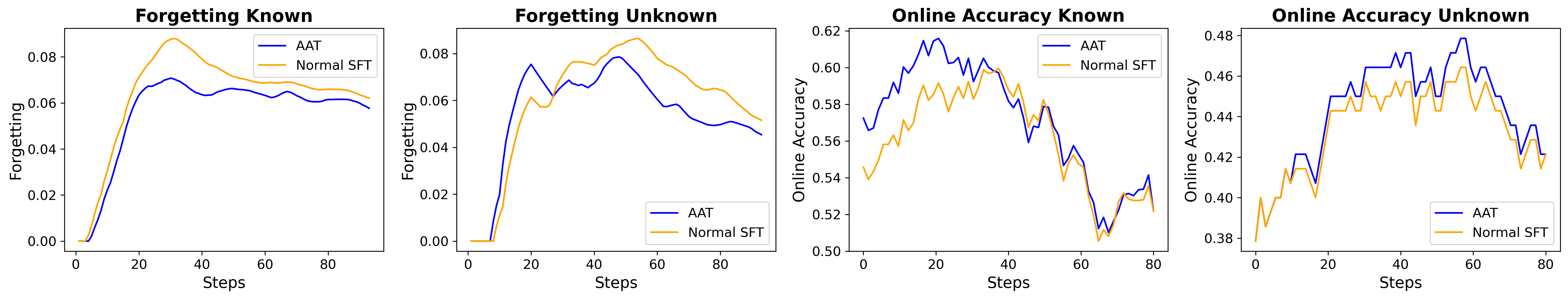}
    \caption{Online accuracy and forgetting over training steps on the RCB.}
    \label{fig:training-curves}
\end{figure*}
\subsection{What Makes Abstraction Effective}






\begin{wraptable}{r}{0.5\textwidth} 
  \centering
      \caption{Performance across abstraction levels.}

  \resizebox{\linewidth}{!}{
    \begin{tabular}{lcccc} 
    \toprule
    \textbf{Method} & $\textbf{Acc}_{\textbf{cumul}}^{\textbf{known}}$ & $\textbf{Acc}_{\textbf{cumul}}^{\textbf{unknown}}$ & $\textbf{F}^{\textbf{known}}$ & $\textbf{F}^{\textbf{unknown}}$ \\
    \midrule
    Random & 0.4756 & 0.3898 & 0.1163 & 0.0845 \\
    Category & 0.5403 & 0.4059 & 0.0713 & 0.0657 \\
    Mask     & 0.5452 & 0.4086 & 0.0679 & 0.0643 \\
    \bottomrule
    \end{tabular}
    
    }
    
    \label{tab:abs-lvl-table}
\end{wraptable}

Having established that abstraction stabilizes learning dynamics, we ask what structural properties of the abstraction drive this effect. \Cref{tab:abs-lvl-table} compares three abstraction levels: entity masking, category-based abstraction (entities replaced with semantic category labels such as city, institution, or person), and random abstracts (entity-masked sentences recombined without preserving relational coherence).

Random abstracts cover the same relations as structured ones during training, yet they degrade performance relative to normal SFT, yielding lower cumulative accuracy and higher forgetting on both edge types. This isolates the critical variable that AAT's benefit does not come from additional training signal, broader relational coverage, or the auxiliary loss itself, but from the structural coherence of the abstraction, whether it preserves the relational skeleton that supports generalization. Incoherent abstracts introduce conflicting gradient signals that increase rather than reduce interference, confirming that the \emph{organization} of the abstract representation, not its mere presence, is what matters.

The gradation random $<$ category $<$ mask maps directly onto predictions from SMT: effective analogical generalization requires suppressing surface attributes sufficiently to expose relational structure, and partial suppression yields proportionally partial benefit. Category-based abstraction retains coarse entity-level information while entity masking removes surface identity entirely. The consistent ordering across all metrics confirms that the degree of structural abstraction, not the type of auxiliary supervision, governs the learning benefit.

\subsection{Generalization to Alternative Abstractions}\label{sec:alt-abstractions}

To assess whether these findings generalize beyond explicitly structured relational graphs, \Cref{tab:epic-table} reports performance on the NAB dataset, where abstraction is implicit and expressed through shared proverb-level meaning across surface-dissimilar narratives. While instance replay achieves the highest accuracy on known narratives, the forgetting results are more revealing. AAT is the only method that reduces forgetting on both known and unknown narratives, whereas instance replay, despite its stronger known-narrative retention, substantially increases forgetting on unknown narratives relative to standard fine-tuning (0.1730 vs. 0.1460), suggesting that revisiting past instances actively interferes with implicit relational generalization. AAT attains the highest accuracy on unknown narratives alongside the lowest forgetting on both dimensions, a profile consistent with its inductive bias toward shared relational 
structure rather than episodic detail.





\begin{wraptable}{r}{0.5\textwidth} 
  \centering
      \caption{Performance comparison on the NAB.}

  \resizebox{\linewidth}{!}{
    \begin{tabular}{lcccc}
    \toprule
    \textbf{Method} & $\textbf{Acc}_{\textbf{cumul}}^{\textbf{known}}$ & $\textbf{Acc}_{\textbf{cumul}}^{\textbf{unknown}}$ & $\textbf{F}^{\textbf{known}}$ & $\textbf{F}^{\textbf{unknown}}$ \\
    \midrule
    Normal SFT & 0.8674 & 0.6086 & 0.0565 & 0.1460 \\
    Buffer-100    & \textbf{0.8940} & 0.5965 & 0.0631 & 0.1730 \\
    AAT     & 0.8683 & \textbf{0.6143} & \textbf{0.0557} & \textbf{0.1429} \\
    \bottomrule
    \end{tabular}
    }
    \label{tab:epic-table}
\end{wraptable}

The NAB result is significant precisely because the abstraction cannot be read off the data surface. There is no entity masking, no explicit relational template, only a shared moral or causal motif (proverb) that must be inferred across narratives differing in domain, entities, and event structure. The fact that AAT reduces forgetting and improves generalization in this setting confirms that its inductive bias is not an artifact of the RCB's explicit relational structure. Jointly optimizing over instances and their structural abstractions, at whatever level that structure is defined, consistently biases learning toward the shared patterns that support both retention and transfer.

\subsection{Abstraction versus Instance Replay}

\begin{table*}[h]
\centering
\caption{Step-averaged online accuracy, forgetting, and cumulative accuracy on the RCB benchmark for Normal SFT, ER with an 100-sample buffer, and AAT across Qwen-1.5B, Qwen-3B, and SmolLM.}
\resizebox{\textwidth}{!}{ 

    \begin{tabular}{llccccccc}
    \toprule
    \textbf{Model} & \textbf{Method} & $\textbf{Acc}_{\textbf{online}}^{\textbf{unknown}}$ & $\textbf{Acc}_{\textbf{online}}^{\textbf{known}}$ & $\textbf{F}^{\textbf{unknown}}$ & $\textbf{F}^{\textbf{known}}$ & $\textbf{Acc}_{\textbf{cumul}}^{\textbf{unknown}}$ & $\textbf{Acc}_{\textbf{cumul}}^{\textbf{known}}$ & $\textbf{Acc}_{\textbf{cumul}}^{\textbf{all}}$ \\
    \midrule
     & Normal SFT & 0.4194 & 0.5487 & 0.0707 & 0.0767 & 0.4032 & 0.5330 & 0.4924 \\
    Qwen-1.5B & Buffer-100 & 0.4023 & \textbf{0.5656} & \textbf{0.0626} & \textbf{0.0635} & 0.3835 & \textbf{0.5515} & 0.4990 \\
    & AAT & \textbf{0.4220} & 0.5568 & 0.0643 & 0.0679 & \textbf{0.4086} & 0.5452 & \textbf{0.5025} \\ 
    
    \hdashline \noalign{\smallskip}
    & Normal SFT &0.4115 & 0.5730 & 0.1166 & 0.0496 & 0.3828 & 0.5552 & 0.5004 \\
    Qwen-3B & Buffer-100 & 0.4115 & \textbf{0.5818} & 0.1161 & \textbf{0.0483} & 0.3802 & \textbf{0.5721} & \textbf{0.5112} \\
    & AAT & \textbf{0.4141} & 0.5708 & \textbf{0.1035} & 0.0507 & \textbf{0.3880} & 0.5552 & 0.5021 \\

    \hdashline \noalign{\smallskip}
     & Normal SFT & 0.5708 & 0.7170 & 0.0300 & 0.0158 & 0.5565 & 0.7098 & 0.6619 \\
    SmolLM-1.7B & Buffer-100 & 0.6452 & 0.7166 & 0.0146 & \textbf{0.0099} & 0.6371 & 0.7152 & 0.6908 \\
    & AAT & \textbf{0.6613} & \textbf{0.7225} & \textbf{0.0060} & \textbf{0.0099} & \textbf{0.6559} & \textbf{0.7200} & \textbf{0.7000} \\ 
    \bottomrule
    \end{tabular}
}

\label{tab:main-table}
\end{table*}

Having established what abstraction does and why structural quality matters, we situate AAT relative to the instance-replay alternative. As described in \Cref{sec:baselines}, ER and AAT can be understood as two different instantiations of the same combined-loss template: both supplement the instance gradient with a stabilizing signal, but ER draws that signal from stored past instances while AAT derives it from the structural abstraction of the current one. The comparison therefore asks not which method is better in aggregate, but what each mechanism preserves and what it trades away.

Performance of all methods on RCB is shown in \Cref{tab:main-table}, including results for Qwen2.5-1.5B, SmolLM-1.7B, and Qwen2.5-3B. The pattern is consistent across all three models. Relative to normal SFT, AAT improves performance especially on unknown relations. Relative to ER, AAT produces a characteristic tradeoff with a 6.54\% gain on unknown edges at a modest 1.14\% cost on known edges for Qwen2.5-1.5B, with similar profiles for the 3B model. 
This tradeoff is not incidental, it is precisely what the cognitive framing predicts. A model biased toward shared relational structure should generalize better to unobserved relations at some cost to instance-level memorization. ER, stabilizing learning by revisiting past instances, shows the complementary profile, stronger known-edge retention without consistent gain on structural generalization. The two methods illuminate different points on the stability-generalization surface.

The SmolLM results show this most clearly. AAT improves cumulative accuracy over all edges by 5.76\% relative to normal SFT, reduces forgetting by 80\% on unknown edges and 37.34\% on known edges, and outperforms the replay baseline in cumulative accuracy while achieving the lowest forgetting overall. The consistency of this profile across two model families with different pretraining corpora and architectures suggests that the inductive bias introduced by abstraction reflects a structural property of the learning signal rather than an artifact of a particular model. Taken together, results across all three models show that abstraction and instance replay are complementary mechanisms that achieve stability through qualitatively different means. Abstraction preserves and generalizes relational structure; replay preserves episodic detail. 

Taken together, the results across all three analyses converge on a single conclusion: abstraction and instance replay are complementary mechanisms that achieve stability through qualitatively different means. Abstraction preserves and generalizes relational structure; replay preserves episodic detail. 

\section{Related Work}

\textbf{CL Benchmarks.} NLP continual learning benchmarks focuse on task-incremental settings with explicit boundaries \citep{aghajanyan2020better,wu2021curriculum,yadav-etal-2023-exploring,wang2023trace,zhao2025mllm,zhang-etal-2023-citb}, extending to domain or language shifts \citep{ke2023continual,castellucci-etal-2021-learning,mhamdi-etal-2023-cross,winata2023overcoming}. Parallel work explores OCL with non-stationary streams \citep{li-etal-2025-tic} or knowledge-graph updates \citep{jang2021towards,wu2023online}, but these benchmarks typically represent each update as a single fact or sentence and do not distinguish episodic retention from structural generalization. RCB and NAB are specifically designed to probe this distinction across different abstraction granularities, filling a gap left by existing benchmarks.

\textbf{CL Methods.} CL methods generally mitigate forgetting through regularization \citep{doi:10.1073/pnas.1611835114,chaudhry2018efficient} or ER \citep{pmlr-v199-ostapenko22a,Smith_2024_CVPR,NEURIPS2024_bb1e9f32,zhuo2023continual,Yan_2022_CVPR}. Replay is often effective \citep{van2020brain} but requires maintaining a memory buffer, scaling poorly in strictly online settings. Architectural strategies such as adapters and parameter expansion reduce interference \citep{gururangan-etal-2022-demix,mhamdi-etal-2023-cross,chen2024coin} but typically require task boundaries and continuous parameter growth. AAT occupies a distinct position: it introduces abstraction as a loss-level inductive bias at the moment of observation, targeting the instance-specific gradients that drive interference.

\textbf{Abstraction in Cognition.} Cognitive foundations of AAT draw on SMT, which proposes that analogical reasoning proceeds by aligning relational structure across domains while abstracting from surface attributes. Empirical work on analogical transfer shows that abstract relational encoding, rather than surface-level instance memorization, supports generalization across contexts \citep{gick1983schema,clement1994effects}. Studies in neuroscience further suggest that event representations organize around components shared across related experiences, supporting replay at multiple abstraction levels and enabling stable long-term learning \citep{masis2022schema,mattar2018prioritized}. In symbolic systems, structural abstraction has likewise been identified as a key signal for stability in non-stationary environments \citep{Sko1988,kuehne2000seql}.
AAT operationalizes these principles as a training signal, offering a computationally tractable bridge between cognitive accounts of schema formation and the challenge of non-stationary language model training.

\section{Conclusion}

Drawing on extensive cognitive evidence for the utility of abstract schemas, we introduced AAT to evaluate whether operationalizing structural abstraction as a training signal can yield the same functional benefits observed in human cognition.
Across two model families and two benchmarks probing different levels of abstraction, the answer is consistently: \textit{yes}. AAT reduces forgetting, improves generalization to unobserved relational structure, and reduces 
sensitivity to data order, a pattern of results that aligns with cognitive predictions for schema-congruent encoding and is not produced by instance-level 
replay alone. The characteristic tradeoff AAT produces, stronger structural generalization at modest cost to instance memorization, is not incidental but reflects the 
inductive bias itself, and maps directly onto the episodic-schematic distinction that cognitive accounts of memory draw. The gradation of performance across abstraction levels (random $<$ category $<$ entity 
masking) further confirms that structural coherence, not mere auxiliary supervision, governs the learning benefit; a result consistent with SMT's prediction that effective analogical generalization requires sufficient suppression of surface attributes to expose relational structure.

Beyond the practical implications for continual learning, these findings raise questions that we believe are productive for both communities. For machine learning: do the computational principles identified here, gradient-level suppression of surface variation and amplification of shared relational structure, generalize to larger models, implicit linguistic structure, and naturalistic data streams? For cognitive science: do these results offer traction on what makes structural abstraction a sufficient mechanism for stable generalization, and can this methodology be used to test other theories that have so far been studied primarily through behavioral experiments?
We view the intersection of these questions as a productive direction for future work, and offer the benchmarks and empirical results as a step toward a bidirectional dialogue between cognitive science and language model research in this domain.

\bibliographystyle{plainnat}
\bibliography{custom}


\appendix

\section{Appendix}

\subsection{Proof of Proposition 1} \label{appendix-proof}
\begin{proof}
By \textbf{A3}, the abstract loss contributes only a relational gradient:
$\nabla_\theta \mathcal{L}_i^{\mathrm{abs}} \propto W^\top \nabla\ell \cdot 
\nabla_\theta\phi_r^{(i)}$.
The combined gradient follows by linearity:
\[
\nabla_\theta \mathcal{L}_i^{\mathrm{total}} \propto 
W^\top \nabla \ell \cdot 
\Big[\nabla_\theta \phi_r^{(i)} + (1-\alpha)\nabla_\theta \phi_e^{(i)}\Big].
\]
By \textbf{A1}, the two components operate on disjoint parameter subsets, so:
\[
\mathrm{Var}\!\left(\nabla_\theta \mathcal{L}_i^{\mathrm{total}}\right) \propto 
 \sigma_r^2 + (1-\alpha)^2 \sigma_e^2.
\]
Since $\alpha \in (0,1)$, we have $(1-\alpha)^2 < 1$, so the entity variance term is strictly reduced relative to the instance-only case $\sigma_r^2 + \sigma_e^2$. The signal-to-noise improvement follows because $(1-\alpha) < 1$ strictly decreases the relational denominator.
\end{proof}

\subsection{Additional Ablation Studies}
\label{sec:appendix-ablation}
We conduct a series of ablation studies to analyze the sensitivity of AAT to its key hyperparameters and environmental factors. To study the abstraction loss weight, we train models with $\alpha \in \{0.1, 0.3, 0.5, 0.7, 0.9\}$ while fixing the local replay count to $n = 5$. Conversely, to evaluate the effect of local replay, we fix $\alpha = 0.5$ and vary $n$ from 1 to 5. In both cases we report cumulative accuracy at the final training step. To assess sensitivity to data order, we train each method on three random permutations of the data stream and report the mean and standard deviation of cumulative accuracy. In the online setting, performance variability across permutations reflects robustness to sequential interference and early sample bias. 
\subsubsection{Effect of AAT hyperparameters}

The effect of abstract loss weight ($\alpha$) in AAT is reported in \cref{fig:alph-effect}. For $\alpha < 0.5$, we observe substantial improvements on unknown edges, indicating enhanced inductive generalization, while gains on known edges remain limited. Performance peaks at $\alpha = 0.5$, where both known and unknown edges improve simultaneously, suggesting a favorable balance between instance-level supervision and abstraction-based regularization.

Beyond this point, performance degrades for both edge types. Although abstraction acts as a regularizer that encourages learning shared relational structure and reduces interference, assigning it excessive weight appears to suppress instance-specific learning. One plausible explanation is that heavily weighting the abstraction loss leads the model to overfit to repetitive abstract inputs, as multiple concrete instances share the same abstract representation. This result underscores the importance of balancing abstract and instance-level objectives rather than allowing abstraction to dominate the training signal.

\begin{figure*}[t]
    \centering
    \begin{tabular}{ccc}
        \begin{minipage}[c]{0.28\textwidth}
            \centering
            \includegraphics[width=\textwidth]{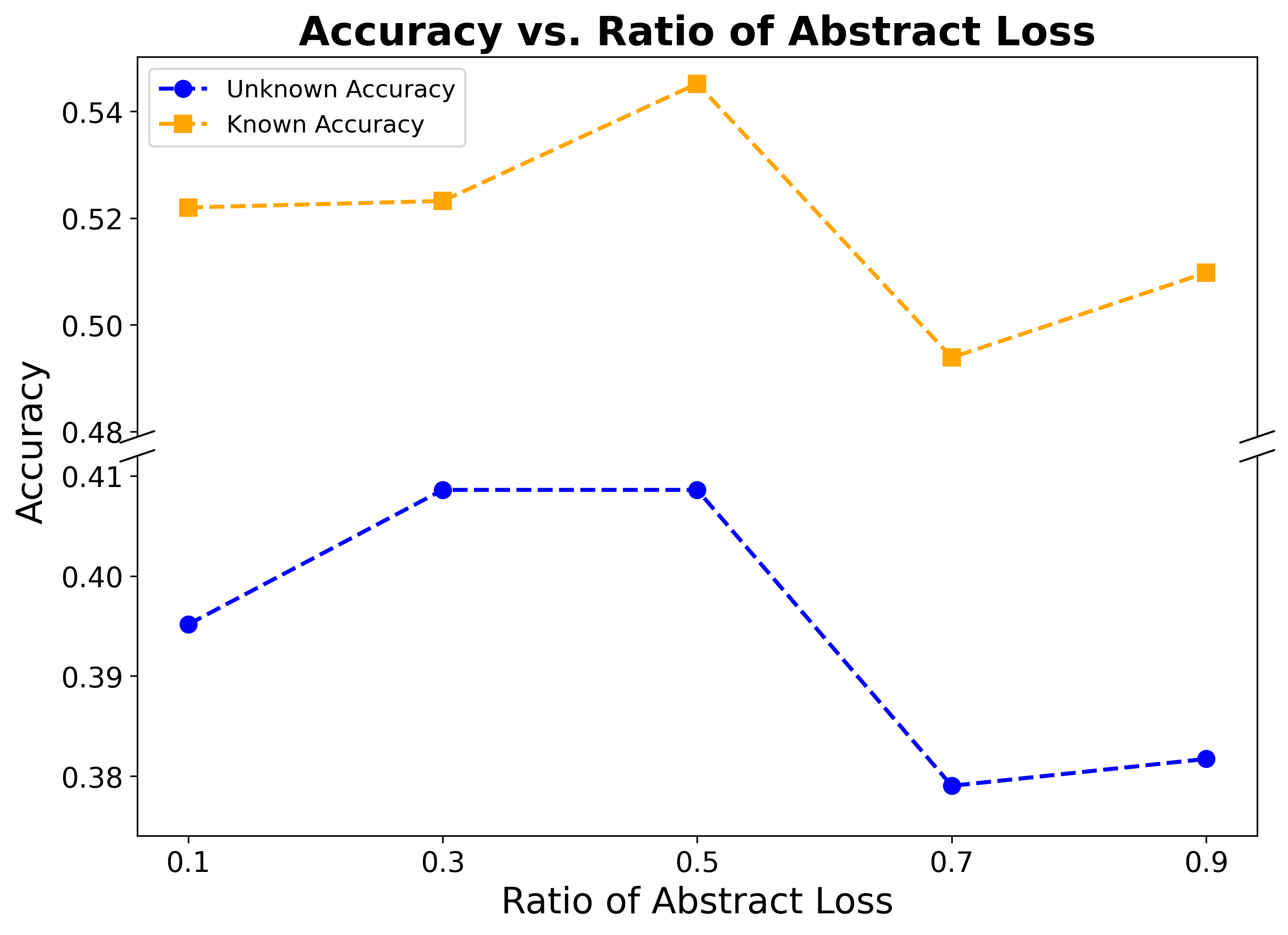}
            \caption{Effect of abstraction loss weight $\alpha$ on cumulative accuracy.}
            \label{fig:alph-effect}
        \end{minipage} & 
        
        \begin{minipage}[c]{0.37\textwidth}
            \centering
            \captionsetup{type=table} 
            \caption{Effect of data order on final cumulative accuracy}
            \label{tab:order-effect}

\resizebox{\linewidth}{!}{%
\renewcommand{\arraystretch}{1}
\begin{tabular}{lccc}
\toprule
\textbf{Method} & $\textbf{Acc}_{\textbf{cumul}}^{\textbf{unknown}}$ & $\textbf{Acc}_{\textbf{cumul}}^{\textbf{known}}$ & $\textbf{Acc}_{\textbf{cumul}}^{\textbf{all}}$ \\
\hline
\noalign{\smallskip}
Normal SFT & \makecell{0.3482 \\ $\pm$ 0.0392} & \makecell{0.5227 \\ $\pm$ 0.0121} & \makecell{0.4674 \\ $\pm$ 0.0197} \\
Buffer-100 & \makecell{0.3645 \\ $\pm$ 0.0189} & \makecell{0.5371 \\ $\pm$ 0.0161} & \makecell{0.4832 \\ $\pm$ 0.0169} \\
AAT & \makecell{0.3745 \\ $\pm$ 0.0265} & \makecell{0.5474 \\ $\pm$ 0.0070} & \makecell{0.4927 \\ $\pm$ 0.0071} \\
\bottomrule
\end{tabular}%
}
        \end{minipage} & 
        
        \begin{minipage}[c]{0.28\textwidth}
            \centering
            \includegraphics[width=\textwidth]{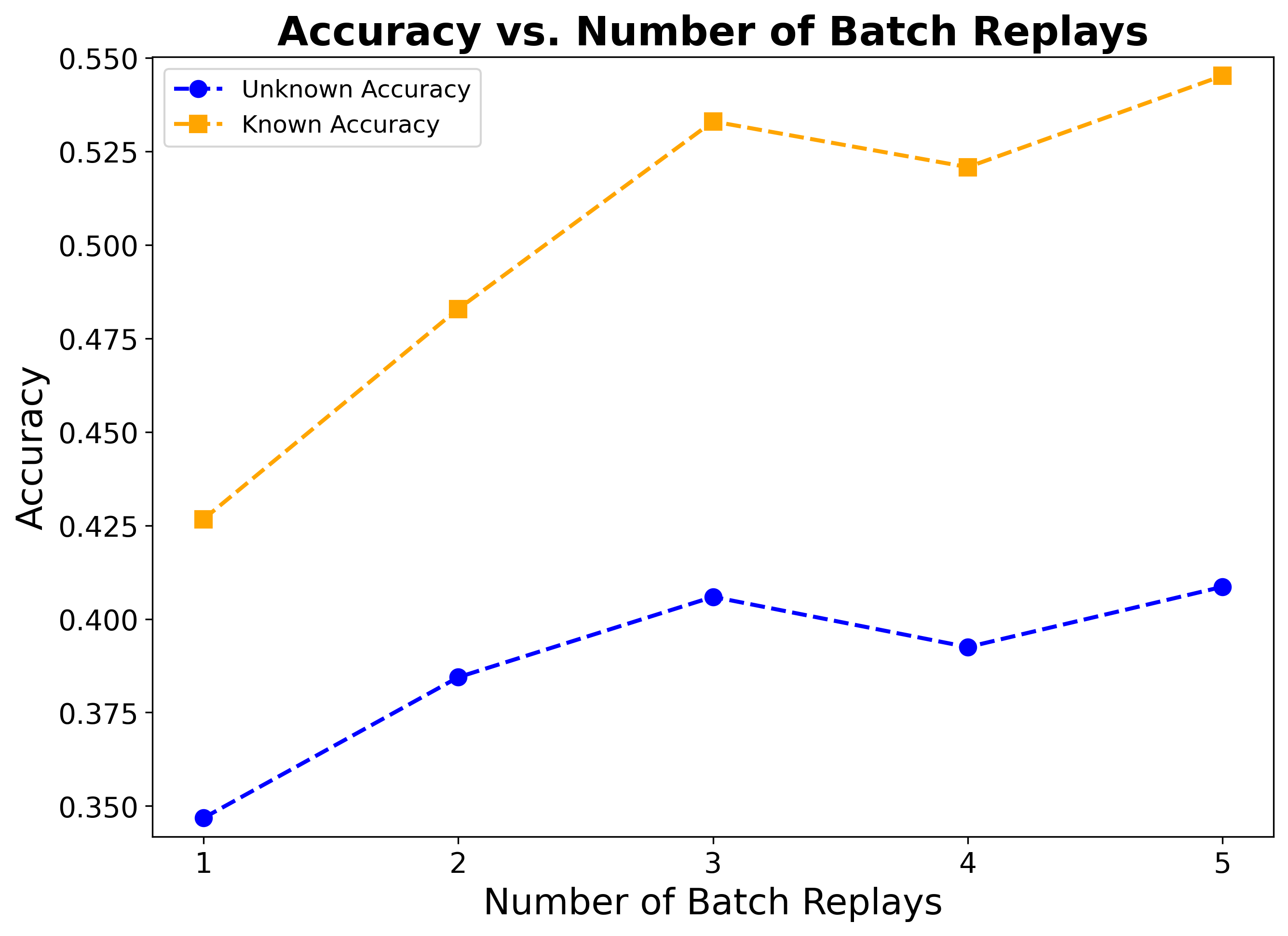}
            \caption{Effect of local replay count $n$ on cumulative accuracy.}
            \label{fig:n-effect}
        \end{minipage}
    \end{tabular}
\end{figure*}

\cref{fig:n-effect} shows the effect local replay count ($n$) on AAT. We observe a consistent improvement in performance on both known and unknown edges as the number of local replays increases. This trend indicates that additional local updates help stabilize learning in the strictly online setting, where each data point is observed only once. The gains from increasing $n$ are particularly pronounced for known edges, suggesting that local replay enables more effective consolidation of newly introduced entities and their relations. While there is a slight performance dip at $n = 4$, performance recovers at $n = 5$, indicating that the overall trend is robust. These results demonstrate that moderate local replay complements abstraction by improving instance-level learning without requiring long-term memory buffers.

\subsubsection{Effect of Data Order (Random Seeds)}

\begin{table*}[th]
\centering
\caption{Dataset statistics for RCB.}
\label{tab:rcb_domain_stats}
\begin{tabular}{lccccc}
\toprule
\textbf{Domain} & \textbf{Total} & \textbf{Derivable} &\textbf{ \% Deriv.} & \textbf{Known Edges} & \textbf{Unknown Edges} \\
\midrule
Genealogy   & 500 & 200 & 40.0\%  & 700 & 500 \\
Profession  & 180 & 180 & 100.0\% & 900 & 180 \\
Arts        & 115 & 50  & 43.5\%  & 280 & 115 \\
Science     & 100 & 25  & 25.0\%  & 125 & 100 \\
History     & 100 & 100 & 100.0\% & 200 & 100 \\
Music       & 100 & 25  & 25.0\%  & 125 & 100 \\
Geopolitics & 75  & 25  & 33.3\%  & 175 & 75  \\
Sports      & 75  & 75  & 100.0\% & 225 & 75  \\
\midrule
Total & 1245 & 680 & 54.6\% & 2730 & 1245 \\
\bottomrule
\end{tabular}
\end{table*}

\begin{table*}[th]
\centering
\caption{Performance across model sizes on the RCB and the NAB.}
\label{tab:benchmarking}
\begin{tabular}{lcccccc}
\toprule
\multirow{2}{*}{Model Size} 
& \multicolumn{3}{c}{RCB} 
& \multicolumn{3}{c}{NAB} \\
& Unknown & Known & Total 
& Unknown & Known & Total \\
\midrule
1.5B & 0.3469 & 0.4695 & 0.4311 & 0.5420 & 0.5103 & 0.5134 \\
3B   & 0.3020 & 0.4537 & 0.4062 & 0.5222 & 0.5266 & 0.5262 \\
7B   & 0.4357 & 0.6062 & 0.5528 & 0.5157 & 0.5198 & 0.5194 \\
\bottomrule
\end{tabular}

\end{table*}

We report the results of different training methods in \cref{tab:order-effect} showing training without abstraction exhibits the highest variance on unknown edges, indicating substantial sensitivity to data order when generalization relies solely on instance-level updates. ER reduces this variance, reflecting its ability to stabilize learning by revisiting past examples.

AAT achieves comparable or higher mean performance while exhibiting reduced variance relative to instance-only training, particularly on known edges and overall accuracy. Although variance on unknown edges remains non-negligible, the results suggest that abstraction mitigates order sensitivity by biasing learning toward relational structure shared across samples, rather than relying exclusively on the sequence in which specific instances are observed.

Overall, these findings indicate that abstraction provides a complementary mechanism to replay for stabilizing online learning, improving robustness to data order without requiring long-term storage of past examples.

\subsection{Implementation Details}\label{sec:imp-detail}

For ER, we employ a reservoir sampling strategy for buffer management. Updates to and sampling from the buffer occur at every training step, with the replay batch size set identical to the training batch size. We employ a warmup phase for the buffer, where replay sampling is disabled until the buffer reaches full capacity. We compare multiple ER configurations with different buffer sizes and local replay counts and use the best configuration in results presented in the main paper (see \Cref{tab:ER-table}). We utilize a learning rate of $5 \times 10^{-5}$ across all training methods. Experiments are conducted on three NVIDIA RTX A6000 GPUs with a per-device batch size of 4. Training on the largest dataset (NAB), with an evaluation interval of 10 steps, requires approximately 1 hour and 20 minutes, totaling about 24 hours across all experiments. To ensure reproducibility, the primary random seed is set to 42. To investigate the impact of data ordering, we employ two additional seeds: 25 and 16. Implementation is based on the Hugging Face ecosystem, specifically \texttt{transformers 4.56.1}, \texttt{torch 2.8.0}, and \texttt{accelerate 1.7.0}. For RCB evaluation, we compute accuracy using a confidence threshold applied to logits. The threshold $\tau$ is selected by collecting logits on a sample of the RCB data and choosing the elbow point in accuracy as a function of $\tau$. 

\begin{table*}[h]
\centering
\caption{Step-averaged online accuracy, forgetting, and cumulative accuracy on the RCB benchmark for ER configurations.}
\resizebox{\textwidth}{!}{ 

    \begin{tabular}{llccccccc}
    \toprule
    \textbf{Model} & \textbf{Method} & $\textbf{Acc}_{\textbf{online}}^{\textbf{unknown}}$ & $\textbf{Acc}_{\textbf{online}}^{\textbf{known}}$ & $\textbf{F}^{\textbf{unknown}}$ & $\textbf{F}^{\textbf{known}}$ & $\textbf{Acc}_{\textbf{cumul}}^{\textbf{unknown}}$ & $\textbf{Acc}_{\textbf{cumul}}^{\textbf{known}}$ & $\textbf{Acc}_{\textbf{cumul}}^{\textbf{all}}$ \\
    \midrule
    & Buffer-50-lp1 & 0.4176 & 0.5699 & 0.0792 & 0.0833 & 0.3934 & 0.5275 & 0.4856 \\
    Qwen-1.5B & Buffer-100-lp1 & 0.3772 & 0.5283 & 0.0950 & 0.0817 & 0.3450 & 0.5051 & 0.4551 \\
    & Buffer-50-lp5 & 0.4014 & 0.5679 & 0.0723 & 0.0689 & 0.3844 & 0.5515 & 0.4993 \\
     & Buffer-100-lp5 & 0.4023 & 0.5656 & 0.0626 & 0.0635 & 0.3835 & 0.5515 & 0.4990 \\
    
    \bottomrule
    \end{tabular}
}

\label{tab:ER-table}
\end{table*}

\subsection{Narrative Expansion}\label{appendix:epic_expansion}

To construct the NAB used in \Cref{sec:alt-abstractions}, we start from a collection of narratives paired with their corresponding proverbs from ePiC dataset \citep{ghosh-srivastava-2022-epic} and expand the dataset by generating 15 additional narratives for each proverb. The original dataset contains 250 proverbs, with 10 narratives associated with each proverb. This expansion increases the number of narrative instances per proverb while preserving the original set of proverbs. 
For narrative expansion, we prompted Qwen2.5-7B-Instruct \citep{qwen2.5}. Prompt design is informed by principles from SMT \citep{gentner1983structure}. For each proverb, an existing narrative is selected as a seed. Prompts discourage lexical overlap and object reuse, require a domain-level shift, and emphasize alignment at the level of abstract relational structure rather than surface form. In addition to generating narrative text, the model is prompted to identify aligned spans between the proverb and the narrative corresponding to shared relational elements. These span annotations are retained for analysis but are not used during training or evaluation. The full expansion prompt is shown in \Cref{fig:expansion_prompt}.

\subsection{Distractor Generation}\label{appendix:epic_distractor}

In NAB we convert each narrative instance into a pairwise continuation comparison task. Each narrative is split into a shared narrative part and two alternative continuations: (i) a \emph{correct ending} that is consistent with the proverb's abstract meaning, and (ii) a \emph{distractor ending} that is locally coherent but violates that abstract relation. For each narrative, we use sentence-level segmentation to break down the narrative into a shared narrative part and the final sentence of the narrative, and consider the final sentence as the correct ending. For distractor generation, we prompted Qwen2.5-7B-Instruct \citep{qwen2.5} with the proverb, the shared narrative part, and the correct ending, instructing the model to produce a single-sentence continuation that flows naturally from the narrative context but violates or ignores the proverb’s abstract meaning. Distractors are constrained to be plausible story continuations and are matched in length to the correct ending, ensuring that superficial cues such as fluency or stylistic coherence are insufficient for solving the task. The model additionally produces a brief explanation describing why the distractor does not align with the proverb, which is retained for analysis but not shown to the model during training or evaluation. The full distractor generation prompt is shown in \Cref{fig:distractor_prompt}. Each resulting instance defines a pairwise continuation comparison task in which the model is evaluated by comparing the log-likelihood assigned to each candidate ending given the same narrative part. Because both endings are locally coherent continuations of the same context, a higher likelihood therefore requires sensitivity to the implicit proverb-level abstraction rather than reliance on surface-level narrative consistency. 
\begin{table*}[h]
    \centering
    \caption{Random samples from the NAB: Proverbs, Narratives and Continuation Task.}
    \label{tab:nab-samples}
    \vspace{2mm}
    \footnotesize 
    \renewcommand{\arraystretch}{1.5} 
    \begin{tabularx}{\textwidth}{@{} l >{\RaggedRight\arraybackslash}X >{\RaggedRight\arraybackslash}X >{\RaggedRight\arraybackslash}X @{}}
        \toprule
        \textbf{Proverb} & \textbf{Narrative Context} & \textbf{Correct Ending} & \textbf{Distractor Ending} \\
        \midrule
        
        \textbf{Jam tomorrow...} & 
        The man believed his friend when he kept borrowing money from him, telling him he would pay him tomorrow. Every day he told him he would pay him soon. & 
        He kept giving him money, thinking of the future, but never saw the loans repaid. & 
        He finally got fed up with the constant requests and decided to save all his money for emergencies. \\
        
        \textbf{Keep your powder dry} & 
        Margo always knew to start any trip with a spare tire, emergency cash, and a car repair kit. She stayed calm and prepared for any catastrophe. & 
        She was yet to need any of this, but she always figured she'd be best prepared if catastrophe ever struck. & 
        With all the supplies in her car, Margo felt ready for anything and drove off, never expecting to encounter issues. \\
        
        \textbf{Laughter is the best medicine} & 
        My grandfather was ill and in a somber mood. I read him a funny speech about a plagiarism scandal, and his eyes lit up with fascination. & 
        The somber mood in the room dissipated like dew as we all wondered loudly, and mirthfully, how that was possible. & 
        The somber mood in the room dissipated as my grandfather's eyes lit up with curiosity about the scandal. \\
        
        \textbf{Procrastination is thief of time} & 
        My son put off an oil change that would have taken 20 minutes. 6 months later, he was sitting at the side of the road with a seized-up engine. & 
        Ultimately, we figured the delay resulted in over 29 hours of combined waiting, extra travel, and car searching. & 
        Fortunately, my son managed to get the engine fixed quickly by borrowing a friend's car, saving him from delays. \\
        
        \textbf{It takes two to tango} & 
        Jeff and Jane were getting into a fight. Jane pointed the finger, saying it was all Jeff's fault. Their mother knew better though. & 
        She knew that they both were causing trouble and Jeff and Jane were both punished. & 
        She quickly intervened and helped Jeff up off the floor, advising him to apologize to Jane. \\

        \bottomrule
    \end{tabularx}
\end{table*}

\subsection{Data Statistics and Samples}\label{appendix:datastats}
We introduced the Relational Cycle Benchmark (RCB) and the Narrative Abstraction Benchmark (NAB) to study the role of abstraction in non-stationary online learning. \Cref{tab:rcb_domain_stats} reports statistics for RCB, including the number of unique known and unknown edges per domain, as well as the fraction of unknown edges that are derivable within each domain. Representative samples from RCB and NAB are provided in \Cref{tab:rcb-sample} and \Cref{tab:nab-samples}. We additionally report results for different sizes of Qwen2.5-Instruct models on both benchmarks (see \Cref{tab:benchmarking}). On RCB, the 1.5B and 3B models exhibit relatively low performance, and increasing model size from 1.5B to 3B does not lead to substantial gains. In contrast, the 7B model achieves a marked improvement, suggesting that larger models are better able to retrieve relational information, including long-tailed facts. On NAB, performance remains largely unchanged across model sizes, with no noticeable scaling benefits. 
\begin{table*}[h]
    \centering
    \caption{Random samples from the RCB: Text Context and Knowledge Graph Edges}
    \label{tab:rcb-sample}
    \vspace{2mm}
    \footnotesize
    \renewcommand{\arraystretch}{1.7}
    \begin{tabularx}{\textwidth}{@{} 
        >{\RaggedRight\arraybackslash}p{5.5cm} 
        >{\RaggedRight\arraybackslash}p{4cm} 
        >{\RaggedRight\arraybackslash}p{4cm} 
        @{}}
        \toprule
        \textbf{Text Context} & \textbf{Known Edges (Sample)} & \textbf{Unknown Edge (Target)} \\
        \midrule
        
        Ljubljana is in Slovenia. Academy of Theatre, Radio, Film and Television is in Ljubljana. Polde Bibi\v{c} works for University of Ljubljana and was educated at the Academy. & 
        (Polde Bibi\v{c}, works for, Univ. of Ljubljana), (Polde Bibi\v{c}, educated at, Acad. of Theatre...), (Ljubljana, country, Slovenia) & 
        (Univ. of Ljubljana, based in country, Slovenia) \\
        
        Closing time is a dance film and has Mette Ingvartsen as a cast member. & 
        (Closing time, cast member, Mette Ingvartsen), (Closing time, genre, dance) & 
        (Mette Ingvartsen, field of work, dance) \\

        \bottomrule
    \end{tabularx}
\end{table*}

\begin{figure*}[th]
\centering
\begin{tcolorbox}[colback=gray!10, colframe=black, fontupper=\small]
\begin{verbatim}
You are an expert dataset creator for testing Analogical Reasoning. 
Your task is to generate a "Distractor Ending" for a story.

# GOAL
You will be given a Proverb, a Story Context, and the Correct Ending.
You must write a single sentence that:
1.  Flows naturally from the Story Context.
2.  Violates or Ignores the meaning of the Proverb. 

# CRITERIA FOR THE DISTRACTOR
- Plausible Storytelling: It should look like a real ending. Do not write gibberish.
- Thematic Mismatch: If the proverb says "haste makes waste" (bad outcome), 
your distractor should perhaps show haste leading to success.
- Length: Keep it roughly the same length as the Correct Ending.

# OUTPUT FORMAT
Strict JSON only:
{
    "distractor_ending": "[Your new sentence here]",
    "explanation": "[Briefly explain why this ending does not fit the proverb]"
}
\end{verbatim}
\end{tcolorbox}
\caption{Distractor generation prompt used for NAB.}
\label{fig:distractor_prompt}
\end{figure*}

\begin{figure*}[h]
\centering
\begin{tcolorbox}[colback=gray!10, colframe=black, fontupper=\small]
\begin{verbatim}
You are an expert Cognitive Scientist and NLP Researcher in Analogical 
Reasoning. Your task is to expand the ePiC dataset (Proverbs in Context) 
by applying Gentner's "Structure-Mapping Theory".

# TASK OVERVIEW
Generate a NEW narrative that is a structural analog to the proverb. 
Move from "Instance Learning" to "Abstraction" by focusing on 
higher-order relations rather than surface-level similarities.

# EXECUTION STEPS
1. Relational Extraction: Identify the core relational predicates.
2. Pragmatic Goal: Identify the "Purpose" (warn, comfort, criticize).
3. Domain Leap: Select a target domain different from the seed.
4. Systematic Alignment: Draft a narrative where the causal pattern 
   allows the relations to map to the proverb.
5. Evaluation: Review for "Object Overlap."
6. Span Discovery: Identify structural counterparts.

# CONSTRAINTS
- No Surface Overlap: No shared objects/themes with the proverb.
- Systematicity: Narrative must explain the "why" (causal chains).
- Span Alignment: Align up to 5 meaningful structural components.

# OUTPUT FORMAT (Strict JSON)
{
    "analysis": {
        "abstract_schema": "...",
        "domain_leap": "..."
    },
    "fields": {
        "quote": "{{quote_placeholder}}",
        "narrative": "...",
        "span_quote_1": "", "span_narrative_1": "",
        "span_quote_2": "", "span_narrative_2": "",
        "span_quote_3": "", "span_narrative_3": "",
        "span_quote_4": "", "span_narrative_4": "",
        "span_quote_5": "", "span_narrative_5": ""
    },
    "judge_scores": {
        "relational_similarity": "1-10",
        "surface_dissimilarity": "1-10",
        "rationale": "..."
    }
}
\end{verbatim}
\end{tcolorbox}
\caption{Narrative expansion prompt used for NAB.}
\label{fig:expansion_prompt}
\end{figure*}


\clearpage

\section*{NeurIPS Paper Checklist}

\begin{enumerate}

\item {\bf Claims}
    \item[] Question: Do the main claims made in the abstract and introduction accurately reflect the paper's contributions and scope?
    \item[] Answer: \answerYes{} 
    \item[] Justification: The abstract and introduction accurately summarize the research questions, methods, and empirical findings, and all stated claims are directly supported by results presented in the main text. 
    \item[] Guidelines:
    \begin{itemize}
        \item The answer \answerNA{} means that the abstract and introduction do not include the claims made in the paper.
        \item The abstract and/or introduction should clearly state the claims made, including the contributions made in the paper and important assumptions and limitations. A \answerNo{} or \answerNA{} answer to this question will not be perceived well by the reviewers. 
        \item The claims made should match theoretical and experimental results, and reflect how much the results can be expected to generalize to other settings. 
        \item It is fine to include aspirational goals as motivation as long as it is clear that these goals are not attained by the paper. 
    \end{itemize}

\item {\bf Limitations}
    \item[] Question: Does the paper discuss the limitations of the work performed by the authors?
    \item[] Answer: \answerYes{} 
    \item[] Justification: Limitations are discussed in the conclusion and method sections, including the use of idealized environments and unaddressed empirical questions, along with implications for generalization and future work. 
    \item[] Guidelines:
    \begin{itemize}
        \item The answer \answerNA{} means that the paper has no limitation while the answer \answerNo{} means that the paper has limitations, but those are not discussed in the paper. 
        \item The authors are encouraged to create a separate ``Limitations'' section in their paper.
        \item The paper should point out any strong assumptions and how robust the results are to violations of these assumptions (e.g., independence assumptions, noiseless settings, model well-specification, asymptotic approximations only holding locally). The authors should reflect on how these assumptions might be violated in practice and what the implications would be.
        \item The authors should reflect on the scope of the claims made, e.g., if the approach was only tested on a few datasets or with a few runs. In general, empirical results often depend on implicit assumptions, which should be articulated.
        \item The authors should reflect on the factors that influence the performance of the approach. For example, a facial recognition algorithm may perform poorly when image resolution is low or images are taken in low lighting. Or a speech-to-text system might not be used reliably to provide closed captions for online lectures because it fails to handle technical jargon.
        \item The authors should discuss the computational efficiency of the proposed algorithms and how they scale with dataset size.
        \item If applicable, the authors should discuss possible limitations of their approach to address problems of privacy and fairness.
        \item While the authors might fear that complete honesty about limitations might be used by reviewers as grounds for rejection, a worse outcome might be that reviewers discover limitations that aren't acknowledged in the paper. The authors should use their best judgment and recognize that individual actions in favor of transparency play an important role in developing norms that preserve the integrity of the community. Reviewers will be specifically instructed to not penalize honesty concerning limitations.
    \end{itemize}

\item {\bf Theory assumptions and proofs}
    \item[] Question: For each theoretical result, does the paper provide the full set of assumptions and a complete (and correct) proof?
    \item[] Answer: \answerYes{} 
    \item[] Justification: All assumptions and the main proposition are stated in the method section, and a complete proof is provided in the appendix with appropriate references. 
    \item[] Guidelines:
    \begin{itemize}
        \item The answer \answerNA{} means that the paper does not include theoretical results. 
        \item All the theorems, formulas, and proofs in the paper should be numbered and cross-referenced.
        \item All assumptions should be clearly stated or referenced in the statement of any theorems.
        \item The proofs can either appear in the main paper or the supplemental material, but if they appear in the supplemental material, the authors are encouraged to provide a short proof sketch to provide intuition. 
        \item Inversely, any informal proof provided in the core of the paper should be complemented by formal proofs provided in appendix or supplemental material.
        \item Theorems and Lemmas that the proof relies upon should be properly referenced. 
    \end{itemize}

    \item {\bf Experimental result reproducibility}
    \item[] Question: Does the paper fully disclose all the information needed to reproduce the main experimental results of the paper to the extent that it affects the main claims and/or conclusions of the paper (regardless of whether the code and data are provided or not)?
    \item[] Answer: \answerYes{} 
    \item[] Justification: The appendix includes detailed implementation information, including hyperparameters, architectures, libraries, hardware specifications, and random seeds sufficient to reproduce the results. 
    \item[] Guidelines:
    \begin{itemize}
        \item The answer \answerNA{} means that the paper does not include experiments.
        \item If the paper includes experiments, a \answerNo{} answer to this question will not be perceived well by the reviewers: Making the paper reproducible is important, regardless of whether the code and data are provided or not.
        \item If the contribution is a dataset and\slash or model, the authors should describe the steps taken to make their results reproducible or verifiable. 
        \item Depending on the contribution, reproducibility can be accomplished in various ways. For example, if the contribution is a novel architecture, describing the architecture fully might suffice, or if the contribution is a specific model and empirical evaluation, it may be necessary to either make it possible for others to replicate the model with the same dataset, or provide access to the model. In general. releasing code and data is often one good way to accomplish this, but reproducibility can also be provided via detailed instructions for how to replicate the results, access to a hosted model (e.g., in the case of a large language model), releasing of a model checkpoint, or other means that are appropriate to the research performed.
        \item While NeurIPS does not require releasing code, the conference does require all submissions to provide some reasonable avenue for reproducibility, which may depend on the nature of the contribution. For example
        \begin{enumerate}
            \item If the contribution is primarily a new algorithm, the paper should make it clear how to reproduce that algorithm.
            \item If the contribution is primarily a new model architecture, the paper should describe the architecture clearly and fully.
            \item If the contribution is a new model (e.g., a large language model), then there should either be a way to access this model for reproducing the results or a way to reproduce the model (e.g., with an open-source dataset or instructions for how to construct the dataset).
            \item We recognize that reproducibility may be tricky in some cases, in which case authors are welcome to describe the particular way they provide for reproducibility. In the case of closed-source models, it may be that access to the model is limited in some way (e.g., to registered users), but it should be possible for other researchers to have some path to reproducing or verifying the results.
        \end{enumerate}
    \end{itemize}

\item {\bf Open access to data and code}
    \item[] Question: Does the paper provide open access to the data and code, with sufficient instructions to faithfully reproduce the main experimental results, as described in supplemental material?
    \item[] Answer: \answerYes{} 
    \item[] Justification: An anonymized version of the code and datasets is provided in the supplementary material, along with instructions for reproducing the main experiments. 
    \item[] Guidelines:
    \begin{itemize}
        \item The answer \answerNA{} means that paper does not include experiments requiring code.
        \item Please see the NeurIPS code and data submission guidelines (\url{https://neurips.cc/public/guides/CodeSubmissionPolicy}) for more details.
        \item While we encourage the release of code and data, we understand that this might not be possible, so \answerNo{} is an acceptable answer. Papers cannot be rejected simply for not including code, unless this is central to the contribution (e.g., for a new open-source benchmark).
        \item The instructions should contain the exact command and environment needed to run to reproduce the results. See the NeurIPS code and data submission guidelines (\url{https://neurips.cc/public/guides/CodeSubmissionPolicy}) for more details.
        \item The authors should provide instructions on data access and preparation, including how to access the raw data, preprocessed data, intermediate data, and generated data, etc.
        \item The authors should provide scripts to reproduce all experimental results for the new proposed method and baselines. If only a subset of experiments are reproducible, they should state which ones are omitted from the script and why.
        \item At submission time, to preserve anonymity, the authors should release anonymized versions (if applicable).
        \item Providing as much information as possible in supplemental material (appended to the paper) is recommended, but including URLs to data and code is permitted.
    \end{itemize}

\item {\bf Experimental setting/details}
    \item[] Question: Does the paper specify all the training and test details (e.g., data splits, hyperparameters, how they were chosen, type of optimizer) necessary to understand the results?
    \item[] Answer: \answerYes{} 
    \item[] Justification: The main paper describes the experimental design, data setup, and evaluation protocols, with full training details and hyperparameters provided in the appendix. 
    \item[] Guidelines:
    \begin{itemize}
        \item The answer \answerNA{} means that the paper does not include experiments.
        \item The experimental setting should be presented in the core of the paper to a level of detail that is necessary to appreciate the results and make sense of them.
        \item The full details can be provided either with the code, in appendix, or as supplemental material.
    \end{itemize}

\item {\bf Experiment statistical significance}
    \item[] Question: Does the paper report error bars suitably and correctly defined or other appropriate information about the statistical significance of the experiments?
    \item[] Answer: \answerYes{} 
    \item[] Justification: We report mean and standard deviation over multiple runs for the first experiment; due to computational constraints, not all experiments include multiple seeds, which we explicitly note.  
    \item[] Guidelines:
    \begin{itemize}
        \item The answer \answerNA{} means that the paper does not include experiments.
        \item The authors should answer \answerYes{} if the results are accompanied by error bars, confidence intervals, or statistical significance tests, at least for the experiments that support the main claims of the paper.
        \item The factors of variability that the error bars are capturing should be clearly stated (for example, train/test split, initialization, random drawing of some parameter, or overall run with given experimental conditions).
        \item The method for calculating the error bars should be explained (closed form formula, call to a library function, bootstrap, etc.)
        \item The assumptions made should be given (e.g., Normally distributed errors).
        \item It should be clear whether the error bar is the standard deviation or the standard error of the mean.
        \item It is OK to report 1-sigma error bars, but one should state it. The authors should preferably report a 2-sigma error bar than state that they have a 96\% CI, if the hypothesis of Normality of errors is not verified.
        \item For asymmetric distributions, the authors should be careful not to show in tables or figures symmetric error bars that would yield results that are out of range (e.g., negative error rates).
        \item If error bars are reported in tables or plots, the authors should explain in the text how they were calculated and reference the corresponding figures or tables in the text.
    \end{itemize}

\item {\bf Experiments compute resources}
    \item[] Question: For each experiment, does the paper provide sufficient information on the computer resources (type of compute workers, memory, time of execution) needed to reproduce the experiments?
    \item[] Answer: \answerYes{} 
    \item[] Justification: We report the type of GPUs used, along with experiment runtime. 
    \item[] Guidelines:
    \begin{itemize}
        \item The answer \answerNA{} means that the paper does not include experiments.
        \item The paper should indicate the type of compute workers CPU or GPU, internal cluster, or cloud provider, including relevant memory and storage.
        \item The paper should provide the amount of compute required for each of the individual experimental runs as well as estimate the total compute. 
        \item The paper should disclose whether the full research project required more compute than the experiments reported in the paper (e.g., preliminary or failed experiments that didn't make it into the paper). 
    \end{itemize}
    
\item {\bf Code of ethics}
    \item[] Question: Does the research conducted in the paper conform, in every respect, with the NeurIPS Code of Ethics \url{https://neurips.cc/public/EthicsGuidelines}?
    \item[] Answer: \answerYes{} 
    \item[] Justification: All datasets are derived from publicly available sources and used in accordance with their licenses; no human subjects are involved. 
    \item[] Guidelines:
    \begin{itemize}
        \item The answer \answerNA{} means that the authors have not reviewed the NeurIPS Code of Ethics.
        \item If the authors answer \answerNo, they should explain the special circumstances that require a deviation from the Code of Ethics.
        \item The authors should make sure to preserve anonymity (e.g., if there is a special consideration due to laws or regulations in their jurisdiction).
    \end{itemize}

\item {\bf Broader impacts}
    \item[] Question: Does the paper discuss both potential positive societal impacts and negative societal impacts of the work performed?
    \item[] Answer: \answerNA{} 
    \item[] Justification: This work is primarily methodological and does not present immediate real-world deployment; no direct societal impacts are identified. 
    \item[] Guidelines:
    \begin{itemize}
        \item The answer \answerNA{} means that there is no societal impact of the work performed.
        \item If the authors answer \answerNA{} or \answerNo, they should explain why their work has no societal impact or why the paper does not address societal impact.
        \item Examples of negative societal impacts include potential malicious or unintended uses (e.g., disinformation, generating fake profiles, surveillance), fairness considerations (e.g., deployment of technologies that could make decisions that unfairly impact specific groups), privacy considerations, and security considerations.
        \item The conference expects that many papers will be foundational research and not tied to particular applications, let alone deployments. However, if there is a direct path to any negative applications, the authors should point it out. For example, it is legitimate to point out that an improvement in the quality of generative models could be used to generate Deepfakes for disinformation. On the other hand, it is not needed to point out that a generic algorithm for optimizing neural networks could enable people to train models that generate Deepfakes faster.
        \item The authors should consider possible harms that could arise when the technology is being used as intended and functioning correctly, harms that could arise when the technology is being used as intended but gives incorrect results, and harms following from (intentional or unintentional) misuse of the technology.
        \item If there are negative societal impacts, the authors could also discuss possible mitigation strategies (e.g., gated release of models, providing defenses in addition to attacks, mechanisms for monitoring misuse, mechanisms to monitor how a system learns from feedback over time, improving the efficiency and accessibility of ML).
    \end{itemize}
    
\item {\bf Safeguards}
    \item[] Question: Does the paper describe safeguards that have been put in place for responsible release of data or models that have a high risk for misuse (e.g., pre-trained language models, image generators, or scraped datasets)?
    \item[] Answer: \answerNA{} 
    \item[] Justification: The released assets consist of benchmark datasets derived from licensed public data and do not pose meaningful risks of misuse. 
    \item[] Guidelines:
    \begin{itemize}
        \item The answer \answerNA{} means that the paper poses no such risks.
        \item Released models that have a high risk for misuse or dual-use should be released with necessary safeguards to allow for controlled use of the model, for example by requiring that users adhere to usage guidelines or restrictions to access the model or implementing safety filters. 
        \item Datasets that have been scraped from the Internet could pose safety risks. The authors should describe how they avoided releasing unsafe images.
        \item We recognize that providing effective safeguards is challenging, and many papers do not require this, but we encourage authors to take this into account and make a best faith effort.
    \end{itemize}

\item {\bf Licenses for existing assets}
    \item[] Question: Are the creators or original owners of assets (e.g., code, data, models), used in the paper, properly credited and are the license and terms of use explicitly mentioned and properly respected?
    \item[] Answer: \answerYes{} 
    \item[] Justification: All external datasets, models, and libraries are properly cited, and their licenses and terms of use are respected. 
    \item[] Guidelines:
    \begin{itemize}
        \item The answer \answerNA{} means that the paper does not use existing assets.
        \item The authors should cite the original paper that produced the code package or dataset.
        \item The authors should state which version of the asset is used and, if possible, include a URL.
        \item The name of the license (e.g., CC-BY 4.0) should be included for each asset.
        \item For scraped data from a particular source (e.g., website), the copyright and terms of service of that source should be provided.
        \item If assets are released, the license, copyright information, and terms of use in the package should be provided. For popular datasets, \url{paperswithcode.com/datasets} has curated licenses for some datasets. Their licensing guide can help determine the license of a dataset.
        \item For existing datasets that are re-packaged, both the original license and the license of the derived asset (if it has changed) should be provided.
        \item If this information is not available online, the authors are encouraged to reach out to the asset's creators.
    \end{itemize}

\item {\bf New assets}
    \item[] Question: Are new assets introduced in the paper well documented and is the documentation provided alongside the assets?
    \item[] Answer: \answerYes{} 
    \item[] Justification: We document the construction, structure, and intended use of the proposed benchmarks in dedicated sections and supplementary material.
    \item[] Guidelines:
    \begin{itemize}
        \item The answer \answerNA{} means that the paper does not release new assets.
        \item Researchers should communicate the details of the dataset\slash code\slash model as part of their submissions via structured templates. This includes details about training, license, limitations, etc. 
        \item The paper should discuss whether and how consent was obtained from people whose asset is used.
        \item At submission time, remember to anonymize your assets (if applicable). You can either create an anonymized URL or include an anonymized zip file.
    \end{itemize}

\item {\bf Crowdsourcing and research with human subjects}
    \item[] Question: For crowdsourcing experiments and research with human subjects, does the paper include the full text of instructions given to participants and screenshots, if applicable, as well as details about compensation (if any)? 
    \item[] Answer: \answerNA{} 
    \item[] Justification: The work does not involve crowdsourcing or human subject research.
    \item[] Guidelines:
    \begin{itemize}
        \item The answer \answerNA{} means that the paper does not involve crowdsourcing nor research with human subjects.
        \item Including this information in the supplemental material is fine, but if the main contribution of the paper involves human subjects, then as much detail as possible should be included in the main paper. 
        \item According to the NeurIPS Code of Ethics, workers involved in data collection, curation, or other labor should be paid at least the minimum wage in the country of the data collector. 
    \end{itemize}

\item {\bf Institutional review board (IRB) approvals or equivalent for research with human subjects}
    \item[] Question: Does the paper describe potential risks incurred by study participants, whether such risks were disclosed to the subjects, and whether Institutional Review Board (IRB) approvals (or an equivalent approval/review based on the requirements of your country or institution) were obtained?
    \item[] Answer: \answerNA{} 
    \item[] Justification: The work does not involve human subjects and therefore does not require IRB approval. 
    \item[] Guidelines:
    \begin{itemize}
        \item The answer \answerNA{} means that the paper does not involve crowdsourcing nor research with human subjects.
        \item Depending on the country in which research is conducted, IRB approval (or equivalent) may be required for any human subjects research. If you obtained IRB approval, you should clearly state this in the paper. 
        \item We recognize that the procedures for this may vary significantly between institutions and locations, and we expect authors to adhere to the NeurIPS Code of Ethics and the guidelines for their institution. 
        \item For initial submissions, do not include any information that would break anonymity (if applicable), such as the institution conducting the review.
    \end{itemize}

\item {\bf Declaration of LLM usage}
    \item[] Question: Does the paper describe the usage of LLMs if it is an important, original, or non-standard component of the core methods in this research? Note that if the LLM is used only for writing, editing, or formatting purposes and does \emph{not} impact the core methodology, scientific rigor, or originality of the research, declaration is not required.
    \item[] Answer: \answerYes{} 
    \item[] Justification: In creation of NAB we use an LLM to expand the data and create distractor endings for the downstream task. This is clearly discussed in the main text and the prompts and procedure are provided in appendix.
    \item[] Guidelines:
    \begin{itemize}
        \item The answer \answerNA{} means that the core method development in this research does not involve LLMs as any important, original, or non-standard components.
        \item Please refer to our LLM policy in the NeurIPS handbook for what should or should not be described.
    \end{itemize}

\end{enumerate}

\end{document}